\newcommand{\xmark}{\ding{55}}%
\newif\ifdraft
\newcommand{\gtc}[1]{\textcolor{cyan}{GT: #1}}
\newcommand{\gt}[1]{\textcolor{cyan}{#1}}
\newcommand{\gtdel}[1]{\textcolor{cyan}{\st{#1}}}
\newcommand{\abc}[1]{\textcolor{purple}{AB: #1}}
\newcommand{\rka}[1]{\textcolor{olive}{#1}}
\newcommand{\rkc}[1]{\textcolor{olive}{RK: #1}}
\newcommand{\rkdel}[1]{\textcolor{olive}{\st{#1}}}
\newcommand{\dcc}[1]{\textcolor{red}{DC: #1}}
\newcommand{\gtc}[1]{}
\newcommand{\gt}[1]{#1}
\newcommand{\gtdel}[1]{}
\newcommand{\abc}[1]{}
\newcommand{\ysc}[1]{}
\newcommand{\ysdel}[1]{}
\newcommand{\rkc}[1]{}
\newcommand{\rka}[1]{#1}
\newcommand{\rkdel}[1]{}
\newcommand{\dcc}[1]{}
\newcommand{\ouralg}{MAS}
\newcommand{\ouralglong}{Multi-view Ancestral Sampling}
\newcommand{\ourmethod}{\ouralg{}}
\def\eqref#1{equation~\ref{#1}}
\def\1{\bm{1}}
\DeclareMathAlphabet{\mathsfit}{\encodingdefault}{\sfdefault}{m}{sl}
\SetMathAlphabet{\mathsfit}{bold}{\encodingdefault}{\sfdefault}{bx}{n}
\definecolor{cvprblue}{rgb}{0.21,0.49,0.74}
\title{\ouralg{}: Multi-view Ancestral Sampling \\ for 3D Motion Generation Using 2D Diffusion }
\author{%
\large
Roy Kapon, Guy Tevet, Daniel Cohen-Or and Amit H. Bermano\\
\normalsize{{Tel Aviv University}}\\
{\tt\small roykapon@mail.tau.ac.il}
}
\begin{document}
\twocolumn[{%
\renewcommand\twocolumn[1][]{#1}%
\maketitle
\vspace{-25pt}
\begin{center}
\centering
\captionsetup{type=figure}
\includegraphics[width=1\textwidth]{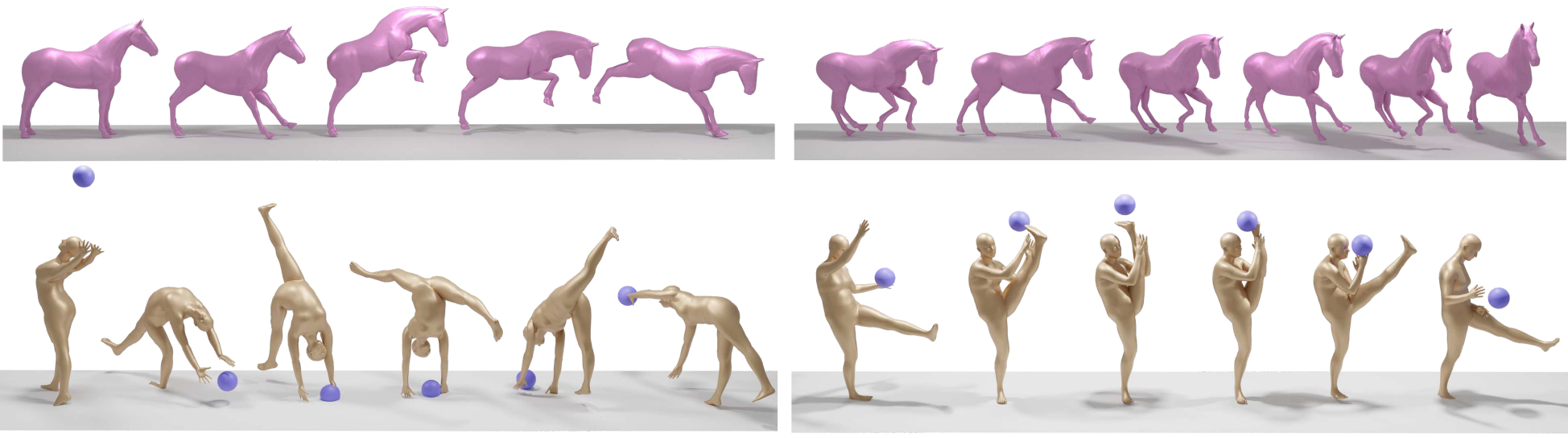}
\captionof{figure}{
3D motions generated by \ouralglong~(\ouralg{}) --- each one using a different initial noise.
Our method generates novel 3D motions using a 2D diffusion model. As such, it enables learning intricate 3D motion synthesis solely from monocular video data.
}
\label{fig:teaser}

\end{center}%
}]

\begin{abstract}
We introduce Multi-view Ancestral Sampling (MAS), a method for 3D motion generation, using 2D diffusion models that were trained on motions obtained from in-the-wild videos.
As such, 
\ouralg{}
opens opportunities to exciting and diverse fields of motion 
previously under-explored as 3D data is scarce and hard to collect. \ouralg{} works by simultaneously 
denoising multiple 2D motion sequences representing different views of the same 3D motion.
It 
ensures consistency across all views at each 
diffusion step by combining the individual generations into a unified 3D sequence, and projecting it back to the original views.
We demonstrate MAS on 2D pose data 
acquired from videos depicting professional basketball maneuvers, rhythmic gymnastic performances featuring a ball apparatus, 
and horse 
races.
In each of these domains, 3D motion capture is arduous, and yet, \ouralg{} generates diverse and realistic 3D sequences. 
Unlike the Score Distillation approach, which 
optimizes each sample by repeatedly applying small fixes, our method uses a sampling process that was constructed for the diffusion framework. 
As we
demonstrate, \ouralg{}
avoids
common issues such as out-of-domain sampling
and mode-collapse.
\href{https://guytevet.github.io/mas-page/.}{https://guytevet.github.io/mas-page/}

\end{abstract}

\section{Introduction}
\label{sec:intro}

3D motion generation is an increasingly popular field that has prominent applications in computer-animated films, video games, virtual reality, and more. One of the main bottlenecks of current approaches is reliance on 3D data, 
which is typically acquired by actors in motion capture studios or created by professional animation artists. 
Both forms of data acquisition are costly, not scalable, do not capture in-the-wild behavior, and leave entire motion domains under-explored.

Fortunately, the ubiquity of video cameras leads to countless high-quality recordings of a wide variety of motions. Naively, a possible way to leverage 
these videos for motion generation tasks is extracting 3D pose estimations and using them as training data. 
However, pose estimation methods are mostly trained using 3D data~\citep{vibe,shetty2023pliks}, thus inheriting the mentioned data limitations. Some methods only require 2D data~\citep{wandt2021elepose,deng2022svmac}, but suffer from noticeable artifacts and temporal inconsistencies.

Recently, ~\citet{azadi2023makeananimation} and \citet{zhang2023vid2player3d}  incorporated 3D motions estimated from images or videos into motion synthesis applications. 
The former used them to enrich an existing motion capture dataset and the latter as reference motions while learning a physically-based Reinforcement Learning policy. In both cases, the quality issues were bridged using strong priors (either high-quality 3D data or physical simulation), 
hence remaining limited to specific settings.
Contrary to the pose estimation approaches, we focus on unconditional 3D motion generation from pure noise.

In this paper, we present 
\ouralglong{} (\ouralg{}), 
a diffusion-based 3D motion generation method, requiring only 2D motion data that can be acquired exclusively from videos.
First, we learn a 2D motion diffusion model from a set of videos, then, 
we employ the \ouralg{} algorithm to effectively sample 3D motions from this learned model.
Our method is based on \emph{Ancestral Sampling} --- the standard denoising loop used for sampling from a diffusion model.
\ouralg{} extends this concept and generates a 3D motion by simultaneously denoising multiple 2D views describing it. At each diffusion denoising step, all views are triangulated into a single 3D motion and then projected back to each view. This ensures multi-view consistency throughout the denoising process, while adhering to the prior's predictions. We further encourage multi-view consistency by projecting a 3D noise to each view whenever sampling from a Gaussian distribution in the 2D ancestral sampling process.

We show that \ourmethod{} generates diverse and realistic motions from the underlying 3D motion distribution using a 2D diffusion model that was exclusively trained on motions obtained from in-the-wild videos.
Furthermore, we show that relying on ancestral sampling allows \ouralg{} to generate a 3D motion in a few seconds only, using a single standard GPU. 
\ourmethod{} excels in scenarios where acquiring 3D motion capture data is 
impractical while video footage is abundant (See Figure~\ref{fig:teaser}). In such settings, we apply off-the-shelf 2D pose estimators to extract 2D motion sequences from video frames, and use them to train our diffusion prior.
We demonstrate \ourmethod{} in three domains:
(1) professional basketball player motions extracted from common NBA match recordings, (2) horse motions extracted from equestrian contests, and (3) human-ball interactions extracted from rhythmic ball gymnastics performances (ball location is an additional parameter predicted by the model).
These datasets demonstrate motion domains that were previously under-explored due to 3D data scarcity.

\section{Related Work}
\label{sec:related}

\textbf{3D Motion Synthesis.}
Multiple works explore 3D motion generation using moderate-scale 3D motion datasets such as HumanML3D~\citep{Guo_2022_CVPR},  KIT-ML~\citep{plappert2016kit} Human3.6M ~\citep{6682899} and HumanAct12~\citep{guo2020action2motion}. 
With this data, synthesis tasks were traditionally learned using Auto-Encoders or VAEs~\citep{kingma2013auto}, \citep{holden2016deep,ahuja2019language2pose,petrovich22temos,Guo_2022_CVPR,tevet2022motionclip}.
Recently, Denoising Diffusion Models~\citep{sohl2015deep, song2020improved} were introduced to this domain by MDM~\citep{tevet2023human}, MotionDiffuse~\citep{zhang2022motiondiffuse}, MoFusion~\citep{dabral2023mofusion}, and FLAME~\citep{kim2022flame}.
Diffusion models were proven to have a better capacity to model the motion distribution of the data and provided opportunities for new generative tasks. Yet the main limitation of all the mentioned methods is their reliance on high-quality 3D motion capture datasets, which are hard to obtain and limited in domain and scale. 
In this context, SinMDM~\citep{raab2023single} enabled non-humanoid motion learning from a single animation; PriorMDM~\citep{shafir2023human} and GMD~\citep{karunratanakul2023gmd} presented fine-tuning and inference time applications for motion tasks with few to none training samples, relying on a pre-trained MDM.

\textbf{Monocular Pose Estimation.}
Monocular 3D pose estimation is a well-explored field~\citep{vibe,shetty2023pliks,yu2023glagcn,shan2023diffusionbased}. Its main challenge is the many ambiguities (e.g. self-occlusions and blurry motion) inherent to the problem.
A parallel line of work is pose lifting from 2D to 3D. MotionBERT~\citep{zhu2023motionbert} demonstrates a supervised approach to the task. %
Some works offer to only use 2D data and learn in an unsupervised manner; \citet{drover20183d} suggest training a 2D discriminator to distinguish between random projections of outputs of a 3D lifting network and the 2D data while optimizing the lifting network to deceive the discriminator; 
ElePose~\citep{wandt2021elepose} train a normalizing-flows model on 2D poses and then use it to guide a 3D lifting network to generate 3D poses that upon projection have high probability w.r.t the normalizing-flows model. They add self-consistency and geometric losses and also predict the elevation angle of the lifted pose which is crucial for their success.

\textbf{Animal 3D Shape Reconstruction.} 
The recent MagicPony~\citep{wu2023magicpony} estimates the pose of an animal given a single image by learning a per-category 3D shape template and per-instance skeleton articulations, trained to reconstruct a set of 2D images upon rendering.
\citet{yao2023artic3d}  suggest a method for improving the input images with occlusions/truncation via 2D diffusion. 
Then, they use a text-to-image diffusion model to guide 3D optimization process to obtain shapes and textures that are faithful to the input images.

\textbf{Text to 3D Scene Generation.}
DreamFusion \citep{poole2022dreamfusion} and SJC \citep{wang2022score}, introduced guidance of 3D content creation using diffusion models trained on 2D data.  \citet{poole2022dreamfusion} suggest SDS, a method for sampling from the diffusion model by minimizing a loss term that represents the distance between the model's distribution and the noised sample distribution. They suggest to harness SDS for 3D generation by repeatedly rendering a 3D representation (mostly NeRF~\citep{mildenhall2020nerf}  based) through a differentiable renderer, noising the resulting images using the forward diffusion, get a correction direction using the diffusion model, and then back-propagate gradients to update the 3D representation according to the predicted corrections.
Although promising, their results are of relatively low quality and diversity and suffer from slow inference speed, overly saturated colors, lack of 3D consistency, and heavy reliance on text conditioning.
Follow-up works such as ProlificDreamer~\citep{wang2023prolificdreamer}, HIFA~\citep{zhu2023hifa}, DreamTime~\citep{huang2023dreamtime}, DDS~\citep{hertz2023delta} and NFSD~\citep{katzir2023noise} expose those weaknesses and suggest various methods to mitigate them.
In a similar context, Instruct-NeRF2NeRF~\citep{instructnerf2023} edit a NeRF by gradually editing its source multi-view image dataset during training, using an image diffusion model. MVDream~\citep{shi2023mvdream} train a diffusion model to generate multiple views of the same object using a 3D object dataset. They apply SDS optimization loop using the diffusion model to correct multiple views of the optimizing object at each iteration. This method and similar ones \citep{liu2023zero1to3,yu2023pointsto3d,seo2023let,huang2023humannorm} 
heavily rely on additional data such as 3D structure, depth or normals, which is not available in our setting.

Contrary to the SDS approach which is an optimization process, our \ourmethod{} 
samples 3D motions from 2D diffusion models at inference. Hence it suggests a faster approach 
and avoids many of the SDS weaknesses by design (See Section~\ref{sec:analysis}).

\section{Preliminary}
\label{sec:preliminary}

\textbf{Diffusion Models and Ancestral Sampling.} Diffusion models are generative models that learn to gradually transform a predefined noise distribution into the data distribution. For the sake of simplicity, 
we consider the source distribution to be Gaussian. 
The forward diffusion process is defined 
by taking a data sample and gradually adding noise to it 
until we get a Gaussian distribution. 
The diffusion denoising model is then parameterized 
according to the reverse of this process, i.e. the model will sample a random Gaussian sample and gradually denoise it until getting a valid sample.

Formally, the forward process is defined by sampling a data sample $x_0\sim q\left(x_0\right)$ and for $t$ in $1,...,T$,  sampling $x_t\sim q\left(x_t|x_{t-1}\right)=\mathcal{N}(x_t;\sqrt{1-\beta_t}x_{t-1},\beta_t I)$, until getting to $x_T$, which has a gaussian distribution $x_T\sim q\left(x_T\right)=\mathcal{N}\left(x_T;0,I\right)$.  

The reverse process, also called \textbf{ancestral sampling}, is defined by sampling a random gaussian noise $x_T\sim p_\phi\left(x_T\right)=\mathcal{N}\left(x_T;0,I\right)$ and then for $t$ in $T,t-1,...,1$,  sampling $\hat{x}_{t-1}\sim p_\phi \left(\hat{x}_{t-1}|x_t\right)$ , until getting to $\hat{x}_0$, which 
should ideally approximate the data distribution.
The model posterior $p_\phi\left(x_{t-1}| x_t\right)$ is parameterized by a network $\mu_\phi\left(x_t, t\right)$:
\begin{align*}
p_\phi\left(x_{t-1}| x_t\right)=q\left(x_{t-1}|x_t,x_0=\mu_\phi\left(x_t;t\right)\right)=\\
\mathcal{N}\left(x_{t-1}; \mu_\phi\left(x_t, t\right), \sigma_t^2I\right)
\end{align*}
\vspace{1pt}
i.e. the new network predicts a mean denoising direction from $x_t$ which is then used
for sampling  $x_{t-1}$ from the posterior distribution derived from the forward process. 
$\mu_\phi$ is further parameterized
by a network $\epsilon_\phi$ that aims to predict the noise embedded in $x_t$: 
\begin{align*}
\mu_\phi(x_t, t) = \frac{1}{\sqrt{\alpha_t}}\left( x_t - \frac{\beta_t}{\sqrt{1-\bar\alpha_t}} \epsilon_\phi(x_t, t) \right) \label{eq:mu_func_approx_langevin}
\end{align*}

Now, when optimizing the usual variational bound on negative log-likelihood, it simplifies to,
\begin{equation*}
\mathcal{L}\left(\phi\right) = \mathbb{E}_{
\epsilon\sim \mathcal{N}\left(\mathbf{0}, \mathbf{I}\right) 
}
\left[w(t)\|\epsilon_\phi\left(\alpha_t x_0 + \sigma_t \epsilon; t\right) - \epsilon\|^2_2\right]
\label{eq:train}
\end{equation*}

which is used as the
training loss. We approximate this loss by sampling $t,\epsilon,x_0$ from their corresponding distributions and calculating the loss term. 
Note that when adding text-conditioning to the model, it is denoted by $p_\phi\left(x|y\right)$ where $y$ is the text prompt.

\textbf{Data Representation.}
A motion sequence is defined on top of a character skeleton with $J$ joints. 
A single character pose is achieved by placing each joint in space. Varying the character pose over time constructs a motion sequence.
Hence, we denote a 3D motion sequence, $X\in \mathbb{R}^{L\times J\times 3}$, with $L$ frames by the $xyz$ location of each joint at each frame.
Note that this representation is not explicitly force fixed bone length. Instead, our algorithm will do so implicitly.
Additionally, This formulation allows us to model additional moving objects in the scene (e.g. a ball or a box) using auxiliary joints to describe their location.

Considering the pinhole camera model\footnote{\url{https://en.wikipedia.org/wiki/3D_projection\#Perspective_projection}}, 
we define a camera-view $v=(R_v, \tau_v, f_v)$ by its rotation matrix $R_v \in \mathbb{R}^{3\times 3}$, translation vector $\tau_v \in \mathbb{R}^{3}$ and the focal length $f_v$ given in meters. Then, a 2D motion, 
$x^v = P(X, v) \in \mathbb{R}^{L\times J\times 2}$, 
from camera-view $v$, is defined as the perspective projection $P$ of $X$ to $v$ such that each joint at each frame is represented with its $uv$ coordinates of the camera space.

In order to drive 3D rigged characters (as presented in the figures of this paper) we retrieve 3D joint angles from the predicted 3D joint positions of $X$ using SMPLify~\citep{Bogo:ECCV:2016} optimization for human characters, and Inverse-Kinematics optimization for the non-humanoid characters (i.e. horses).

\section{Method}
\label{sec:method}

Our goal is to generate 3D motion sequences using a diffusion model trained on monocular 2D motions.
This would enable 3D motion generation in the absence of high-quality 3D data, by leveraging the ubiquity of monocular videos describing those scenes.
To this end, we introduce \ouralglong{} (\ouralg{}), a method that simultaneously generates multiple views of a 3D motion via ancestral sampling. \ouralg{} maintains consistency between the 2D motions in all views at each denoising step to construct a coherent 3D motion. A single \ouralg{} step is illustrated in Figure~\ref{fig:overview}.

In our experiments we first extract 2D pose estimations from in-the-wild videos and use them to train a 2D diffusion model $\hat{x}_0 = G_{2D}(x_t)$, that predicts the clean 2D motion, $\hat{x}_0$ at each denoising step (See Figure~\ref{fig:pipeline}).

\ouralg{} then uses the diffusion model to simultaneously apply an ancestral sampling loop on multiple 2D motions, which represent views of the same 3D motion from $V$ different camera angles. 
At each denoising step $t$, we get a set of noisy views $x^{1:V}_t$ as input and predict clean samples $\hat{x}^{1:V}_0 = G_{2D}(x^{1:V}_t)$. Then, the \emph{Consistency Block} is applied in two steps: (1) Triangulation: find a 3D motion $X$ that follows all views as closely as possible.
(2) Reprojection: project the resulting 3D motion to each view, getting $\tilde{x}^{1:V}_0$, which we can think of as a multiview-consistent version of the predicted motions. Finally, we can sample 
the next step
$x^{1:V}_{t-1}$ from the backward posterior $x^{1:V}_{t-1} \sim q\left(x_{t-1}|x_t,\tilde{x}^{1:V}_0\right)$ just like the original ancestral sampling algorithm. 
Repeating this denoising process up to $t=0$ yields multiple views of the same 3D motion. Finally, we triangulate the resulting 2D motions to create a 3D motion, which is returned as the final output. This sampling process is detailed in 
Algorithm~\ref{alg:3D sampling}. %
The remainder of this section describes the monocular data collection and diffusion pre-training (\ref{sec:prep}), followed by a full description of \ouralg{} building blocks (\ref{sec:alg}).

\subsection{Preparations}
\label{sec:prep}

\textbf{Data Collection.}
We collect videos from various sources ---
NBA videos, horse jumping contests, and rhythmic gymnastics contests. We then apply multi-person and object tracking using off-the-shelf models to extract bounding boxes. Subsequently, we use other off-the-shelf models for 2D pose estimation to get 2D motions. 
Implementation details are in Section ~\ref{sec:exp}. We build on the fact that 2D pose estimation is a well-explored topic, with large-scale datasets that can be easily scaled as manual annotations are much easier to obtain compared to 3D annotation which usually requires a motion capture studio.

 \textbf{2D Diffusion Model Training.}
We follow \citet{tevet2023human} and train the unconditioned version of the Motion Diffusion Model (MDM) with a transformer encoder backbone for each of the datasets separately. We boost the sampling of MDM by a factor of $10$ by learning $100$ diffusion steps instead of the original $1000$. %

\subsection{\ouralglong{}}
\label{sec:alg}
We would like to construct a way to sample a 3D motion using a model that generates 2D samples. First, we observe that a 3D motion is uniquely defined by 2D views of it from multiple angles. Second, we assume that our collected dataset includes a variety of motions, from multiple view-points, and deduce that our 2D diffusion model can generalize for generating multiple views of the same 3D motion, for a wide variety of 3D motions.
Thus, we aim to generate multiple 2D motions that represent multiple views of the same 3D motion, from a set of different view-points.

\begin{figure}[t!]
\begin{center}
\includegraphics[width=\columnwidth]{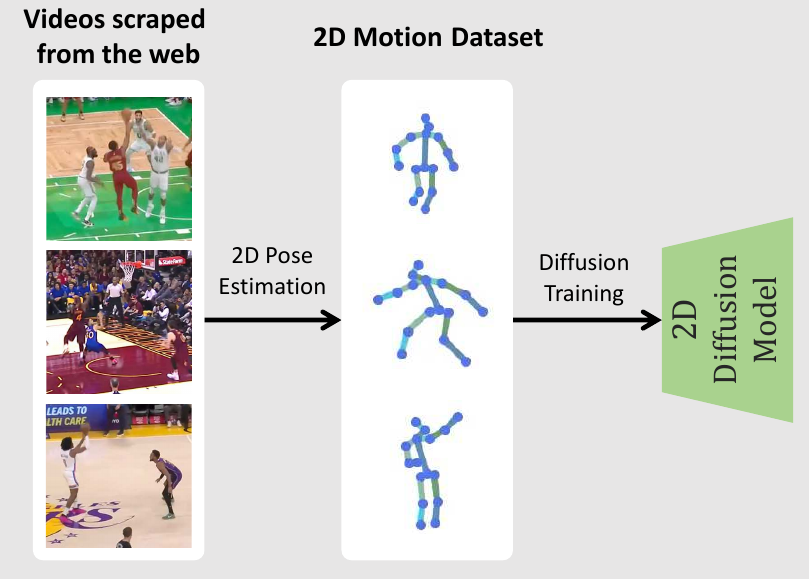}
\end{center}
\caption{
\textbf{Preparations.} The motion diffusion model used for \ouralg{} is trained on 2D motion estimations of videos scraped from the web. 
}
\label{fig:pipeline}
\end{figure}

\begin{figure*}
\begin{center}
\includegraphics[width=\textwidth]{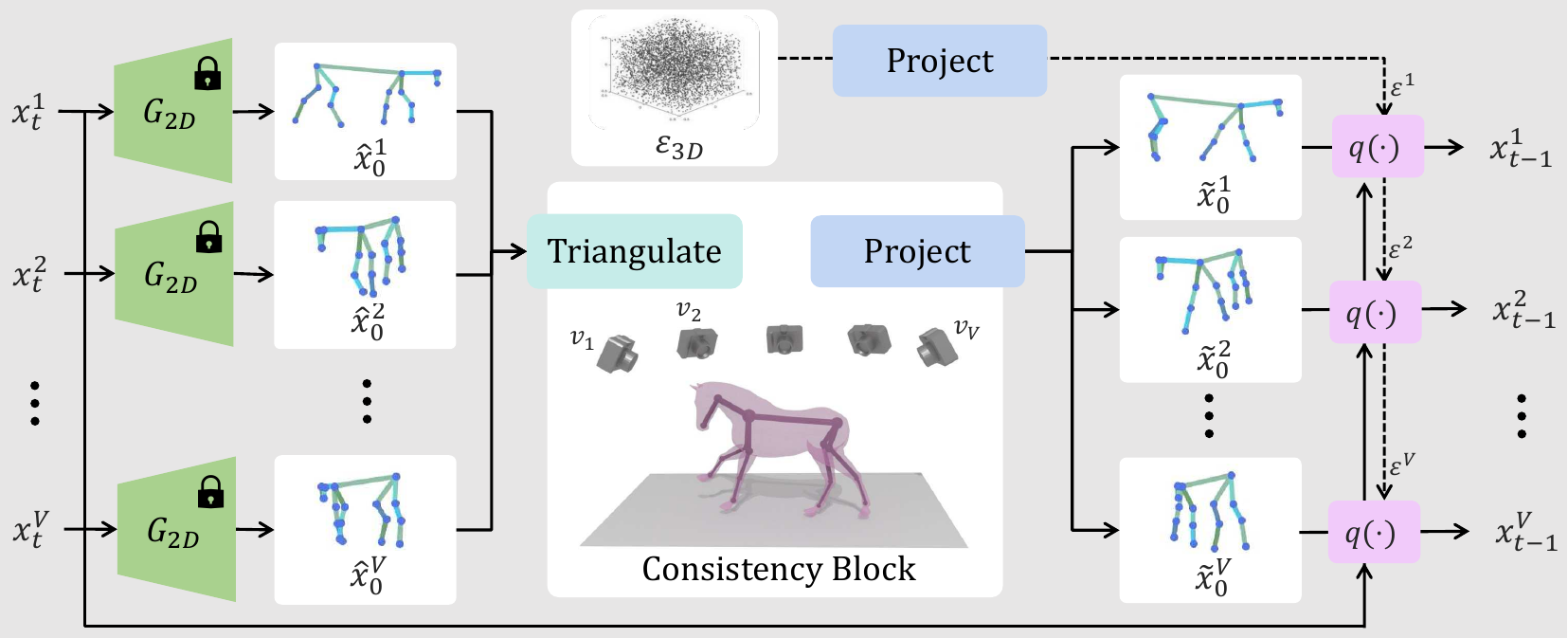}
\end{center}
\caption{
The figure illustrates an overview of \ouralg{}, showing a multi-view denoising step from the 2D sample collection $x_t^{1:V}$ to $x_{t-1}^{1:V}$, corresponding to camera views ${v}_{1:V}$. Denoising is performed by a fixed 2D motion diffusion model $G_{2D}$. At each such iteration, our \emph{Consistency Block} triangulates the motion predictions $\hat{x}_0^{1:V}$ into a single 3D sequence and projects it back onto each view ($\tilde{x}_0^{1:V}$). To encourage consistency in the model's predictions, we sample 3D noise, $\epsilon_{3D}$ and project it to the 2D noise ${\epsilon^{v}}$ for each view. Finally, we sample $x_{t-1}^{1:V}$ from $q\left(x^{1:V}_{t-1}|x^{1:V}_t,\tilde{x_0}^{1:V}\right)$.
}
\label{fig:overview}
\end{figure*}
\textbf{Ancestral Sampling for 3D generation.}
As described in 
Section~\ref{sec:preliminary}, diffusion models are designed to be sampled using gradual denoising, following the ancestral sampling scheme. Hence, we design \ouralg{} to generate multiple 2D motions via ancestral sampling, while guiding all views to be multiview-consistent. Formally, we take a set of $V$ views, distributed 
evenly around the motion subject, with elevation angle distribution heuristically picked for each dataset. Then, for a each view $v$ we initialize $x_T^v$ with noise, and for $t=T,...,1$ transform $x_t^v$ to $x_{t-1}^v$ until getting a valid 2D motion $x_0^v$ for each view. We choose to generate all views concurrently, keeping all views in the same diffusion timestep throughout the process.

 In every denoising step we receive $x_t^{1:V}=\left(x_t^1,...,x_t^V\right)$. We derive the clean motion predictions by applying the diffusion model in each view $\hat{x}_0^v:=\frac{x_{t}^v-\sqrt{1-\bar{\alpha}_{t}}\epsilon_{\phi}\left(x_t^v\right)}{\sqrt{\bar{\alpha}_{t}}}$, getting $\hat{x}_0^{1:V}=\left(\hat{x}_0^1,...,\hat{x}_0^V\right)$. 
 We apply our multi-view Consistency Block to find multi-view consistent motions $\tilde{x}_0^{1:V}$ that approximate the predicted motions $\hat{x}_0^{1:V}$. 
 We then use the resulting motions  $\tilde{x}_0^{1:V}$ as the denoising direction by sampling $x_{t-1}^v$ from $q\left( x_{t-1}^v | x_t^v, x_0=\tilde{x}_0^v \right)$, and outputting $x_{t-1}^{1:V}=\left(x_{t-1}^1,...,x_{t-1}^V\right)$.

\ouralg{} can be extended to support dynamic camera-view along sampling instead of fixed ones as detailed in 
Appendix~\ref{app:dynamic}. 
Since this is not empirically helpful for our application, we leave it out of our scope.

 \textbf{Multi-view Consistency Block}
As mentioned, the purpose of this block is to transform multiview motions $\hat{x}_0^{1:V}$ into multiview-consistent motions $\tilde{x}_0^{1:V}$ that are as similar as possible. We achieve this by finding a 3D motion $X$ that when projected to all views, it resembles the multiview motions $\hat{x}_0^{1:V}$ via \emph{Triangulation}. 
We then return projections of $X$ to each view  
$\tilde{x}_0^{1:V} = \left(P\left( X,1 \right),...,P\left( X,V \right)\right)$,
as the multiview-consistent motions. Since the denoising process is gradual, the model's predictions are approximately multiview-consistent so the consistency block only makes small corrections.

\textbf{Triangulation.}
We calculate $X$ via optimization to minimize the difference between projections of $X$ to all views and the multiview motion predictions $\hat{x}_0^{1:V}$:

\begin{align*}
X =
\underset{X'}{\arg\min}\lVert P\left( X', 1\!\!:\!\!V\right)-\hat{x}_0^{1:V}\rVert_2^2 
=\\
\underset{X'}{\arg\min} \sum_{v=1}^{V}\lVert P\left(X',v\right)  -\hat{x}_0^{v}\rVert_{2}^{2}
\end{align*}

For faster convergence, we initialize $X$  with the optimized results from the previous sampling step. This way the process can also be thought of progressively refining $X$ but we wish to emphasize that the focus remains the ancestral sampling in the 2D views.

\textbf{3D Noise.}
When triangulating the 2D motions $\hat{x}_0^{1:V}$, we would like them to be as close to being multiview-consistent as possible. A critical observation is that for our model to generate multiview-consistent motions we would like to pass it multiview-consistent noised motions. To this end, we design a new noise sampling mechanism that will (1) keep Gaussian distribution for each view, and (2) maintain multiview-consistency.

We start by sampling 3D noise $\varepsilon_{3d}\sim \mathcal{N}\left( 0, I\right)$ ($\varepsilon_{3d}\in \mathbb{R}^{L\times J\times 3}$). Projecting this noise to each view using perspective projection will result in a distribution that is not Gaussian. Hence, we instead use orthographic projection, which preserves Gaussian distribution for each view 
(see Appendix~\ref{appendix:theorems},\ref{theorem: multiview noise distribution}), 
and can differ from perspective projection by at most $O\left(1/\left(d-1\right)\right)$, where $d$ is the distance between the camera and the subject's center and assuming the subject is normalized to be bounded in a sphere with radius $1$
 (see Appendix~\ref{appendix:theorems},\ref{theorem: orthographic vs perspective}). 
We then use the resulting distribution
for sampling the initial noise $x_T$ and when sampling 
$x_{t-1}\sim q\left( x_{t-1} | x_t, x_0=P\left( X\right) \right)$ 
which significantly improves the quality and diversity of our results (see table \ref{table:ablation}).

\begin{figure}
\begin{center}
\includegraphics[width=1\columnwidth]
{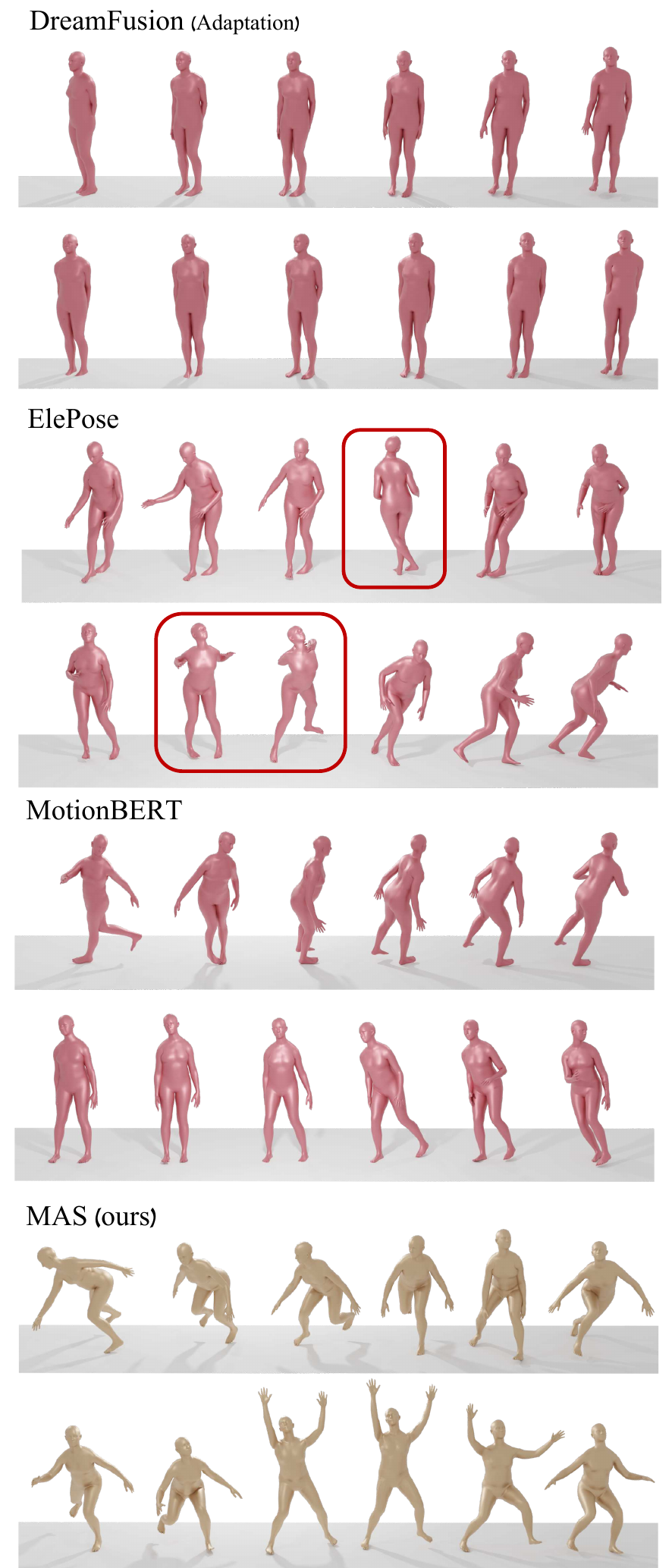}
\end{center}
\caption{
Generated motions by \ouralg{} compared to ElePose~\citep{wandt2021elepose}, 
MotionBert~\citep{zhu2023motionbert},
and an adaptation of
DreamFusion~\citep{poole2022dreamfusion}  to unconditioned motion generation.
We observe that MotionBert and DreamFusion produce dull motions with limited movement and ElePose predictions are jittery and often include invalid poses (Red rectangles).
}
\label{fig:comparison}
\end{figure}

\begin{figure}
\includegraphics[width=\columnwidth]{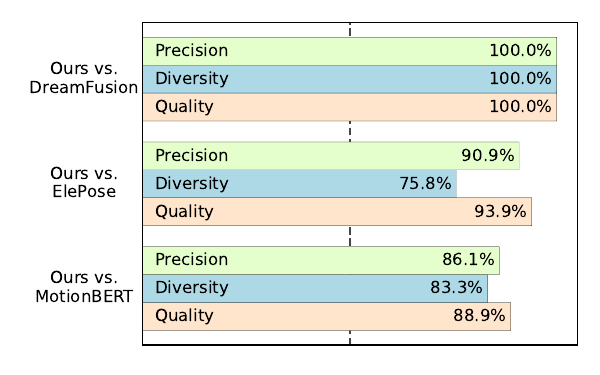}
\caption{
\textbf{NBA Dataset User study.} 
We asked $22$ unique users to compare $15$ randomly generated motions by each of the models to MAS generations in $3$ aspects - \emph{precision} (i.e. what samples best depict Basketball moves), Overall \emph{Quality} and \emph{Diversity}.
The dashed line marks $50\%$. \ouralg{} outperforms the lifting methods and the DreamFusion adaptation.
} \label{fig:user_study}
\end{figure}

\begin{algorithm*}[t]
\caption{\ouralglong{} (\ouralg{})}
\begin{algorithmic}
\State \textbf{Sample camera views:} $v_{1:V}\sim \mathcal{V}$

\State\textbf{Initialize 3D noise:} $\varepsilon_{3D} \sim \mathcal{N}_{L \times J \times 3}\left(0,I\right)$
\State\textbf{Initialize views by projection:} $x^{1:V}_T = P\left(\varepsilon_{3D},v_{1:V}\right)$
\For{$t=T,T-1,...,0$}
\State $\hat{x}^{1:V}_0 = G_{2D} \left( x^{1:V}_t\right)$
\State\textbf{Triangulate:} $X = \mathop{\mathrm{argmin}}\limits_{X'\in \mathbb{R}^{L\times J\times 3}} || P\left(X', v_{1:V}\right) -  \hat{x}^{1:V}_0||^2_2$
\Comment{$X, \varepsilon_{3D} \in \mathbb{R}^{L\times J\times 3}$}
\State \textbf{Back-project:} $\tilde{x}^{1:V}_0 = P(X, v_{1:V})$ 

\Comment{$x^{1:V}_{t}, \hat{x}^{1:V}_0, \tilde{x}^{1:V}_0, \varepsilon^{1:V} \in \mathbb{R}^{V\times L\times J\times 2}$}

\State \textbf{Sample noise:} $\varepsilon_{3D} \sim \mathcal{N}_{L \times J \times 3}\left(0,I\right)$ 

\State \textbf{Project noise:} $\varepsilon^{1:V} = P\left( \varepsilon_{3D}, v_{1:V}\right)$

\State \textbf{Denoising step:} $x^{1:V}_{t-1}=\frac{\beta_t \sqrt{\bar{\alpha}_{t-1}}}{1-\bar{\alpha}_t}x^{1:V}_t + \frac{\left( 1-\bar{\alpha}_{t-1} \right) \sqrt{\alpha_t}}{1-\bar{\alpha}_t}\tilde{x}^{1:V}_{0} + \frac{\beta_t \left( 1-\bar{\alpha}_{t-1} \right)}{1-\bar{\alpha_t}}\varepsilon^{1:V} $
 \EndFor
\State \textbf{Output triangulation:} 
$\mathop{\mathrm{argmin}}\limits_{X'} || P(X', v_{1:V}) - x^{1:V}_0 ||^2_2$
\end{algorithmic}
\label{alg:3D sampling}
\end{algorithm*}

\begin{table*}[t]
\centering
\vspace{10pt}
\resizebox{0.9\textwidth}{!}{
\begin{tabular}{ lcccccc  } %
\toprule
 Dataset Name & Subject & \#Samples & Length Range & Average Length & FPS & In-the-wild videos\\
\hline
Human3.6M~\citep{6682899} & Humans& 300& 42s-240s& ~104s&25 & \xmark \\
NBA videos & Humans & $ 60K$ &  $4$s-$16$s & $ 6$s & $30$ & \checkmark \\
 Horse jumping contests & Horses & $ 2K$ &  $3$s-$40$s & $ 7$s & $20$ & \checkmark \\
Rhythmic ball gymnastics & Humans + Ball & $ 500$ & $10$s-$120$s & $81$s & $20$ & \checkmark \\
 \bottomrule

\end{tabular}
}

\caption{\textbf{2D Datasets.} 
Details of the 2D motion datasets used for our experiments. The last three are newly collected in-the-wild datasets which we made available at \href{https://guytevet.github.io/mas-page/.}{https://guytevet.github.io/mas-page/}. }
\label{tab:datasets}
\vspace{-15pt}
\end{table*}

\section{Method Discussion}
\label{sec:analysis}

In this section, we discuss the properties of \ouralg{}, contextualizing it within the landscape of recent advancements in the text-to-3D domain.

\textbf{Ancestral sampling.} 
\ouralg{} is built upon the ancestral sampling process.
This means that the model is used in its intended way over in-domain samples.
This is in contrast to SDS-based methods~\citep{poole2022dreamfusion}  which 
employ a sampling scheme that uses the forward diffusion to noise images rendered from a 3D representation that is only partially optimized. This can lead to out-of-distribution samples, particularly 
when using smaller timesteps where the model expects motions that are close to being real.
This phenomenon is also addressed by \cite{wang2022score} and \cite{huang2023dreamtime}, who suggest heuristics to alleviate the out-of-distribution problem but do not fundamentally solve it.
Furthermore, most SDS-based methods sample $x_t$ independently in each iteration, which may lead to a high variance in the correction signal. Contrarily, using ancestral sampling has, by definition, a large correlation between $x_t$ and $x_{t-1}$, which leads to a more stable process and expressive results. Since \ouralg{} is sampling-based, it naturally models the diversity of the distribution, while optimization-based methods often experience mode-collapse or
divergence, as addressed by \cite{poole2022dreamfusion}.
It is worth noting that SDS is a clever design for cases where ancestral sampling cannot be used. 

\textbf{Multi-view stability.}
\ouralg{} simultaneously samples multiple views that share the same timestep at each denoising step. 
SDS-based methods typically use a single view in each optimization step, forcing them to make concessions such as small and partial corrections to prevent ruining the 3D object from other views. This also leads to a state where it is unknown which timestep to choose, since only partial denoising steps were applied (also shown by \cite{huang2023dreamtime}). \ouralg{} avoids such problems since the multiview denoising steps are applied simultaneously. It allows us to apply full optimization during the triangulation process. Hence, by the end of the $i$'th iteration, each view follows the model's distribution at timestep $T-i$. This alleviates the need for timestep scheduling and avoids out-of-distribution samples.

\textbf{3D noise consistency.}
\ouralg{}'s usage of a multiview-consistent noise distribution, 
critically boosts multiview-consistency in the model's predictions and greatly benefits the quality and diversity of the generated motions.
SDS-based methods sample uncorrelated noise in different views, which leads to inconsistent corrections, that can result in a lack of 3D consistency, slower convergence or even divergence.

\section{Experiments}
\label{sec:exp}

\subsection{Data Collection}

In order to demonstrate the merits of our method, we apply \ouralg{} on three different 2D motion datasets. Each dataset addresses a different motion aspect that is under-represented in existing 3D motion datasets (See Table~\ref{tab:datasets}).
(1) The NBA players' performance dataset demonstrates motion generation in domains of human motions that are 
poorly covered by existing datasets. 
(2) The horse show-jumping contests dataset shows generation in a domain that has almost no 3D data at all and has a completely different topology. 
Finally, (3) the rhythmic-ball gymnastics dataset shows that our method opens the possibility to model 
interactions with dynamic objects. All datasets include motions from diverse views, which is crucial for the success of our method. We detail the data collection process in 
Appendix~\ref{sec:data_collect}.

\gt{In addition, we evaluate \ouralg{} on the 3D motion dataset, Human3.6M~\citep{6682899}, by projecting the motions to random 2D cameras.}

All motions are represented as $x \in \mathbb{R}^{L\times J\times 2}$ as was detailed in Section~\ref{sec:preliminary}, where NBA is using the AlphaPose body model with $16$ joint, horses represented according to APT-36K with $17$ joints and the gymnastics dataset is represented with the COCO body model~\citep{lin2015microsoft} with $17$ joints plus additional joint for the ball. 
All 2D pose predictions are accompanied by confidence predictions per joint per frame which are used in the diffusion training process.

\subsection{Implementation Details}
Our 2D diffusion model is based on MDM~\citep{tevet2023human}, and composed of a transformer encoder with 6 attention layers of 4 heads and a latent dimension of 512. 
This backbone supports motions with variable length in both training and sampling, which makes \ouralg{} support it as well.
To mitigate some of the pose prediction errors, we mask low-confidence joint predictions from the training loss.
We used an ADAM optimizer with $10^{-4}$ lr for training and cosine noise scheduling. We learn $100$ diffusion steps instead of $1000$ which accelerate \ouralg{} $10$-fold without compromising the quality of the results.
We observe that \ouralg{} performs similarly for any $V\geq3$ and report $5$ camera views across all of our experiments.
The camera views $v_{1:V}$ are fixed through sampling,  surrounding the character and sharing the same elevation angle, with azimuth angles evenly spread around $[0,2\pi]$.
Generating a 3D sample with \ouralg{} takes less than $10$ seconds on a single NVIDIA GeForce RTX 2080 Ti. 
\gt{Performance details can be found in the 
Appendix~\ref{sec:perf}.}

\begin{table*}[t]
\vspace{10pt}
\resizebox{0.8\textwidth}{!}{
\begin{tabular}{ lcc|cc|cc|cc  }
\toprule
 & \multicolumn{2}{c}{FID$\downarrow$} & \multicolumn{2}{c}{Diversity$\rightarrow$} & \multicolumn{2}{c}{Precision$\uparrow$} & \multicolumn{2}{c}{Recall$\uparrow$} \\
 \hline
 View Angles  & All & Side  & All & Side  & All & Side  & All & Side  \\
 \hline
Ground Truth 
& \multicolumn{2}{c|}{$1.05^{\pm.02}$}
& \multicolumn{2}{c|}{$8.97^{\pm.05}$}
& \multicolumn{2}{c|}{$0.73^{\pm.01}$}
& \multicolumn{2}{c}{$0.73^{\pm.01}$}
\\
 \hline
 ElePose~\citep{wandt2021elepose}                            
 & $10.76^{\pm.45}$ & $18.28^{\pm.33}$ %
 & $9.72^{\pm.05}$ & $\mathbf{8.98^{\pm.06}}$ %
 & $0.28^{\pm.02}$ & $0.26^{\pm.02}$ %
 & $0.58^{\pm.03}$ & $0.17^{\pm.01}$ %
 \\
MotionBert~\citep{zhu2023motionbert}
& $30.22^{\pm.26}$ & $36.89^{\pm.40}$ %
& $9.57^{\pm.09}$ & $8.67^{\pm.08}$ %
& $0.04^{\pm4e-03}$ & $0.03^{\pm.01}$ %
& $0.34^{\pm.04}$ & $0.15^{\pm.04}$ %
\\
\ouralg{} (Ours) & \multicolumn{2}{c|}{$\mathbf{5.38^{\pm.06}}$}
& \multicolumn{2}{c|}{$9.47^{\pm.06}$}
& \multicolumn{2}{c|}{$\mathbf{0.50^{\pm.01}}$}
& \multicolumn{2}{c}{$\mathbf{0.60^{\pm.01}}$}
\\
 \hline
\end{tabular}
}
\centering
\caption{\textbf{Comparison with pose lifting on NBA dataset.} \ouralg{} outperforms state-of-the-art unsupervised lifting methods. Furthermore, lifting methods experience a drop in recall when evaluated from the side view ($\mathcal{U}\left(\frac{\pi}{4}, \frac{3\pi}{4}\right)$), while \ouralg{} does not suffer from this limitation as it is a generative approach, and not lifting-based.
`$\rightarrow$' means results are better when the value is closer to the real distribution ($8.97$ for Diversity); \textbf{bold} marks best results.
}
\label{table:comparison_nba}
\vspace{-10pt}
\end{table*}
\begin{table}[t]
\centering
\vspace{10pt}
\resizebox{1\columnwidth}{!}{
\begin{tabular}{lcccc}
\toprule
& FID$\downarrow$ & Diversity$\rightarrow$ & Precision$\uparrow$ & Recall$\uparrow$ \\
\hline
Ground Truth 
& $1.05^{\pm.02}$  
& $8.97^{\pm.05} $
& $0.73^{\pm.01}$ 
& $0.73^{\pm.01} $
\\
2D Diffusion Model & 
$5.23^{\pm.13}$ 
& $9.70^{\pm.08}$ 
& $0.44^{\pm.02}$ 
& $0.78^{\pm.01}$ 
\\
\hline
\ouralg{} (Ours) 
& $\mathbf{5.38^{\pm.06}}$ %
& $\mathbf{9.47^{\pm.06}}$ %
& $\mathbf{0.50^{\pm.01}}$ %
& $0.60^{\pm.01}$ %
\\
with $2$ views ($120^\circ$) 
& $6.87^{\pm.14}$ 
& $9.99^{\pm.06}$ 
& $0.35^{\pm.01}$ 
& $\mathbf{0.80^{\pm.01}}$ 
\\
\textcolor{lightgray}{- 3d noise}
& \textcolor{lightgray}{$17.40^{\pm.12}$} %
& \textcolor{lightgray}{$6.67^{\pm.07}$} %
& \textcolor{lightgray}{$0.93^{\pm.01}$} %
& \textcolor{lightgray}{$0.01^{\pm2.6e-03}$} %
\\
\hline
DreamFusion~\citep{poole2022dreamfusion}
& $66.38^{\pm1.24}$ %
& $8.25^{\pm.16}$ %
& $0.33^{\pm.08}$ %
& $0.17^{\pm.13}$ %
\\
\bottomrule
\end{tabular}
}
\caption{
\textbf{Ablations.} We compare \ouralg{} to an adaptation of DreamFusion~\citep{poole2022dreamfusion} to the unconditional motion generation domain. Our evaluation measures the quality of 2D projections of the 3D generated motions. Our ablations show that \ouralg{} performs best with as few as $5$ views (ours), and 3D noise is crucial for preventing mode collapse. \textcolor{lightgray}{gray} indicates mode-collapse (Recall$<10\%$), \textbf{bold} marks the best results otherwise. `$\rightarrow$' means results are better when the value is closer to the real (train data) distribution.
}
\label{table:ablation}
\vspace{-15pt}
\end{table}

\subsection{Evaluation}
 Here we explore the quality of the 3D motions generated by our method. Our experiments are conducted on the NBA dataset to allow comparison with existing methods, which mostly explore human motion. Usually, we would compare the generated motions to motions sampled from the dataset. In our case, we do not have 3D data so we must introduce a new way to evaluate the 3D generated motions. 
 For that sake, we rely on the assumption that a 3D motion is of high-quality if and only if all 2D views of it are of high-quality. 
 Consequently, we suggest taking random projections of the 3D motions and comparing them with our 2D data. 
 More specifically, we generate a set of 3D motions, with lengths sampled from the data distribution, then sample a single angle for every motion with yaw drawn from $\mathcal{U}\left[0,2\pi\right]$ and a constant pitch angle fitted for each dataset. We project the 3D motion to the sampled angle using perspective projection, from a constant distance (also fitted for each dataset) and get a set of 2D motions.

Finally, we follow common evaluation metrics~\citep{raab2022modi,tevet2023human} used for assessing unconditional generative models: 
    \emph{FID} measures Fréchet inception distance between the generated data distribution and the test data distribution;
    \emph{Diversity} measures the variance of generated motion
in latent space;
    \emph{Precision} measures the portion of the generated data that is covered by the test data;
    \emph{Recall} measures the portion of the test data distribution that is covered by the measured distribution.
These metrics are predominantly calculated in latent space. Hence, we train a VAE-based evaluator for each dataset.
We evaluate over $1K$ random samples and repeat the process $10$ times to calculate the average value and confidence intervals.
Table~\ref{table:ablation} shows that \ouralg{} results are comparable to the diffusion model in use, which marks a performance upper bound in 2D. We show that the addition of the multiview-consistent noise is crucial to the success of our method and prevents mode collapse. 
\gt{
A thorough ablation study for the number of views, camera distance, and number of diffusion steps can be found in 
Appendix~\ref{sec:add_results}.
}

We evaluate an adaptation of DreamFusion~\citep{poole2022dreamfusion} to the unconditioned motion generation domain and show that it performs poorly. 
\rka{This is carried out by initializing a random 3D motion and then performing 200 SDS iterations using the same diffusion model we used for \ouralg{}. Each iteration is comprised of: (1) Projecting the 3D motion to some random view (view distribution is the same as in \ouralg{}).(2) Noising the resulting 2D motion to some diffusion timestep $t \sim \mathcal{U}[1, T]$. (3)  Letting our diffusion model predict a cleaner version of the noised motion. (4) Updating the 3D motion to fit the predicted motion in the sampled view using a single optimization step. The implementation of this adaptation can be found in our published code.}

We also experimented with higher iteration numbers than 200 and techniques such as timestep scheduling and optimization tuning but saw no significant improvement.

We compare our method with off-the-shelf SOTA methods for supervised pose lifting - MotionBERT~\citep{zhu2023motionbert} - and unsupervised pose lifting - ElePose~\citep{wandt2021elepose}. Although these methods are not generative per-se, we consider lifted motions from 2D motions sampled from the training data as generated samples. As Elepose only requires 2D data, we train it on our NBA dataset and adjust the geometric priors to our data. MotionBert was trained on Human3.6M ~\citep{6682899} dataset and some in-the-wild videos, so it is applied in a zero-shot setting.
Table~\ref{table:comparison_nba} shows that \ouralg{} outperforms both lifting methods.

Since we sample a uniform angle around the lifted motions, we often project them to views that are similar to the lifted view. This results in a motion that resembles the lifted motion, which was sampled from the train data, thus boosting performance.
We show that when evaluating from the side view ($\text{angle}\sim \mathcal{U}\left(\frac{\pi}{4}, \frac{3\pi}{4}\right)$ relative to the lifting angle)), the lifting methods experience a clear degradation in performance. \ouralg{} is unaffected as it is a generative approach and has no "side" view.
\gt{
Repeating this experiment with the 3D dataset Human3.6M, randomly projected into 2D cameras shows that \ouralg{} is on par with the side-view performance of MotionBERT, and ElePose. More details in 
Appendix~\ref{sec:add_results}.
}

Figure~\ref{fig:comparison} demonstrates the quality of \ouralg{} compared to DreamFusion, MotionBERT, and ElePose.
\gt{
Figure~\ref{fig:user_study} presents a user study conducted with $22$ participants comparing $15$ randomly generated 3D motions by each of the models.
An example screenshot from the study can be found in 
Appendix~\ref{sec:add_results}.
}

\section{Conclusions}
\label{sec:discussion}

In this paper, we introduced \ouralg{}, a generative 
method designed for 3D motion synthesis using 2D data.
We showed that high-quality 3D motions can be sampled from a diffusion model trained on 2D data only.
The essence of our method lies in its utilization of a multiview diffusion ancestral sampling process, where each denoising step contributes to forging a coherent 3D motion sequence.

Our experiments show that \ouralg{} excels with in-the-wild videos, enabling it to produce motions that are otherwise exceedingly challenging to obtain through conventional means.

Our method could also be employed in additional domains such as multi-person interactions, hand and face motions, complex object manipulations and with recent developments in tracking of ``any" object~\citep{wang2023tracking}, we wish to push the boundaries of data even further.

\rka{Our method does experience some failure cases: The character occasionally folds into itself when changing direction, and the character sometimes changes its scale throughout the motion.} \ouralg{} \rka{also} inherits the limitations of the 2D data it is using and thus cannot naively predict global position, or apply textual control. We leave extending the data acquisition pipeline to support such features to future work. It is also worth noting that our method requires 2D data that captures a variety of views of similar motions. Finally, we hope the insights introduced in this paper can also be utilized in the text-to-3D field and other applications.

\section*{Acknowledgements}
We thank Elad Richardson, Inbar Gat, and Matan Cohen for thoroughly reviewing our early drafts.
We thank Sigal Raab, Oren Katzir, and Or Patashnik for the fruitful discussions.
This research was supported in part by the Israel Science Foundation (grants no. 2492/20 and 3441/21), Len Blavatnik and the Blavatnik family foundation, and The Tel Aviv University Innovation Laboratories (TILabs). This work was supported by the Yandex Initiative in Machine Learning.

{
    \small
    \bibliographystyle{ieeenat_fullname}
    \bibliography{main_bib}
}

\clearpage
\appendix
\section*{\textbf{Appendix}}

\section{Performance Details}
\label{sec:perf}
Table~\ref{table:costs} displays the time needed for a single sample generation and the GPU memory it consumes.

\begin{table}[H]
\centering
\resizebox{0.9\columnwidth}{!}{
\begin{tabular}{lcccc}
\toprule
& MAS & DreamFusion & ElePose & MotionBert \\
\hline
Time[$sec$]
& $9$  
& $17$
& $2.3\cdot 10^{-3}$ 
& $1$
\\
\hline
Memory[$MB$]&$794$&$794$&$686$&$784$\\
\bottomrule
\end{tabular}
}
\caption{
\small
Time and memory costs per single sample generation.
}
\label{table:costs}
\end{table}

\section{Dynamic View-point Sampling}
\label{app:dynamic}
Keeping the optimized views constant could theoretically lead to overfitting a motion to the optimized views, while novel views might have a lower quality. Note that this problem arises only at a lower number of views ($<5$). For this reason, we suggest a way to re-sample the viewing-points: After every step, we can save $X^{\left(i\right)}$ and the 3D noise sample used $\epsilon_{3D}^{\left(i\right)}$. When trying to sample $x_t$ for a newly sampled view $v$ we can then take all $X^{\left(0\right)},...,X^{\left(T-t\right)}$, and all $\epsilon_{3D}^{\left(0\right)},..., \epsilon_{3D}^{\left(T-t\right)}$ and project them to view $v$. We can then apply a sampling loop using the projections, just like we did in the original algorithm. We observe that in our setting, this method does not lead to significant improvement so we present it as an optional addition.

\section{Data Collection}
\label{sec:data_collect}
To demonstrate the merits of \ouralg{} we collected three 2D motion datasets, extracted from in-the-wild videos.

\textbf{NBA videos.} We collected about $10K$ videos from the NBA online API\footnote{\url{https://github.com/swar/nba_api}}. We then applied multi-person tracking using ByteTrack~\citep{zhang2022bytetrack}, and  AlphaPose~\citep{fang2022alphapose} for 2D human pose estimation (based on the tracking results). 
We finally processed and filtered the data by centering the people, filtering short motions, crowd motions, and motions of low quality, splitting discontinuous motions (caused typically by tracking errors), mirroring, and applying smoothing interpolations. 

\textbf{Horse jumping contests.} We collected 3 horse jumping contest videos (around 2-3 hours each) from YouTube.com. We then apply YoloV7~\citep{wang2023yolov7} for horse detection and tracking and VitPose~\citep{xu2022vitpose} trained on APT-36K~\citep{yang2022apt} for horse pose estimation. The post-processing pipeline was similar to the one described above. 

\textbf{Rhytmic-ball gymnastics}. We used the Rhythmic Gymnastics Dataset~\citep{zeng2020hybrid} to get 250 videos, about 1.5 minutes long each, of high-standard international competitions of rhythmic gymnastics performance with a ball. We followed the pipeline described for NBA videos to obtain athletes' motions and also use YoloV7 ~\citep{wang2023yolov7} for detecting bounding boxes of sports balls. We take the closest ball to the athlete at each frame and add the center of the bounding box as an additional "joint" in the motion representation.

All motions are represented as $x \in \mathbb{R}^{L\times J\times 2}$, 
where NBA is using the AlphaPose body model with $16$ joint, horses represented according to APT-36K with $17$ joints and the gymnastics dataset is represented with the COCO body model~\citep{lin2015microsoft} with $17$ joints plus additional joint for the ball. 
All 2D pose predictions are accompanied by confidence predictions per joint per frame which are used in the diffusion training process.

\section{Additional Experiments}
\label{sec:add_results}

Table~\ref{table:comparison_human36m} presents a comparison of our method with off-the-shelf SOTA methods for supervised pose lifting - MotionBERT~\citep{zhu2023motionbert}, unsupervised pose lifting - ElePose~\citep{wandt2021elepose}, and DreamFussion~\citep{poole2022dreamfusion} adaptation. \ouralg{} is on par with the lifting method for the more challenging side views.

Table~\ref{table:more_ablation} depicts an ablation study for the number of views, camera distance, and diffusion steps.

Figure~\ref{fig:user_study_screenshot} presents a screenshot from the user study presented in the paper, including the wording of the questions for each of the three aspects - \emph{Precision, Diversity, and Quality}.

\begin{table*}[t]
\vspace{10pt}
\resizebox{0.9\textwidth}{!}{
\begin{tabular}{ lcc|cc|cc|cc  }
\toprule
 & \multicolumn{2}{c}{FID$\downarrow$} & \multicolumn{2}{c}{Diversity$\rightarrow$} & \multicolumn{2}{c}{Precision$\uparrow$} & \multicolumn{2}{c}{Recall$\uparrow$} \\
 \hline
 View Angles  & All & Side  & All & Side  & All & Side  & All & Side  \\
 \hline
Human3.6M (GT)
& \multicolumn{2}{c|}{$7.34^{\pm0.18}$}
& \multicolumn{2}{c|}{$10.74^{\pm0.15}$}
& \multicolumn{2}{c|}{$0.52^{\pm0.01}$}
& \multicolumn{2}{c}{$0.91^{\pm0.005}$}
\\
\hline
ElePose
& $11.20^{\pm0.36}$ & $24.13^{\pm0.16}$
& $10.67^{\pm0.05}$ & $10.24^{\pm0.08}$
& $0.47^{\pm0.02}$ & $\mathbf{0.41^{\pm0.01}}$
& $0.80^{\pm0.01}$ & $0.25^{\pm0.01}$
\\
MotionBert
& $14.05^{\pm0.14}$ & $24.12^{\pm0.29}$
& $11.46^{\pm0.07}$ & $\mathbf{11.18^{\pm0.06}}$
& $0.32^{\pm0.01}$ & $0.21^{\pm0.01}$
& $0.88^{\pm1.21e-03}$ & $0.56^{\pm0.02}$
\\
MAS (ours)
& \multicolumn{2}{c|}{$\mathbf{15.15^{\pm0.16}}$}
& \multicolumn{2}{c|}{$11.94^{\pm0.07}$}
& \multicolumn{2}{c|}{$0.21^{\pm0.01}$}
& \multicolumn{2}{c}{$\mathbf{0.92^{\pm0.01}}$}
\\

\hline
\end{tabular}
}
\centering
\caption{\textbf{Comparison with pose lifting on Human3.6M dataset.} \ouralg{} has a competitive performance to lifting methods that were designed for this dataset. However, \ouralg{} outperforms the lifting methods when evaluated from the side view. Here, \textbf{bold} marks the best results when comparing to the side view.
}
\label{table:comparison_human36m}
\end{table*}

\begin{figure}
\includegraphics[width=\columnwidth]{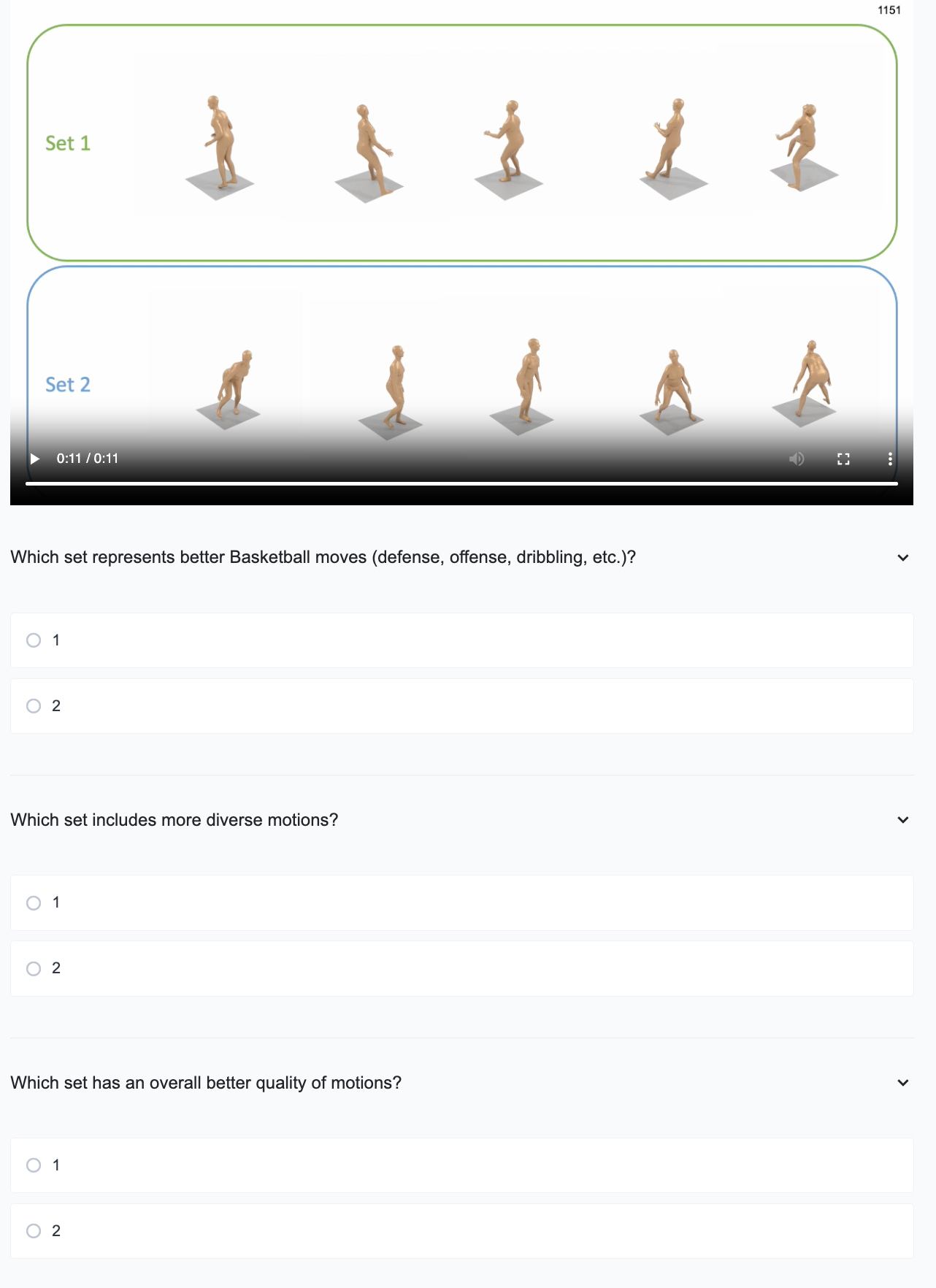}
\caption{
\textbf{NBA User study screenshot.} 
A screenshot from the user study conducted with \href{https://www.pollfish.com/}{https://www.pollfish.com/}.
} \label{fig:user_study_screenshot}
\end{figure}

\begin{table}[t]
\centering
\resizebox{1\columnwidth}{!}{
\begin{tabular}{lcccc}
\toprule
& FID$\downarrow$ & Diversity$\rightarrow$ & Precision$\uparrow$ & Recall$\uparrow$ \\
\hline
Ground Truth 
& $1.05^{\pm.02}$  
& $8.97^{\pm.05} $
& $0.73^{\pm.01}$ 
& $0.73^{\pm.01} $
\\
\hline
\#views=2 ($120^\circ$) 
&$5.17^{\pm.12}$
&$9.86^{\pm.04}$
&$0.42^{\pm.03}$
&$0.77^{\pm.01}$
\\
\#views=3
&$4.01^{\pm.15}$
&$9.55^{\pm.04}$
&$0.53^{\pm.02}$
&$0.70^{\pm.01}$
\\
\#views=5 (ours)
& $\mathbf{3.92^{\pm.15}}$
& $\mathbf{9.47^{\pm.07}}$
& $\mathbf{0.56^{\pm.03}}$
& $\mathbf{0.67^{\pm.01}}$
\\
\#views=9
&$\underline{3.94^{\pm.12}}$
&$\underline{9.48^{\pm.05}}$
&$\mathbf{0.56^{\pm.02}}$
&$\mathbf{0.67^{\pm.01}}$
\\
\#views=21
&$\underline{3.94^{\pm.12}}$
&$\underline{9.48^{\pm.05}}$
&$\mathbf{0.56^{\pm.02}}$
&$\mathbf{0.67^{\pm.01}}$
\\

\hline
camera dist=2[m]
&$7.59^{\pm.13}$
&$\mathbf{9.36^{\pm.05}}$
&$0.46^{\pm.01}$
&$0.46^{\pm.01}$
\\
camera dist=3[m]
&$4.78^{\pm.11}$
&$\underline{9.45^{\pm.05}}$
&$0.53^{\pm.01}$
&$0.63^{\pm.02}$
\\
camera dist=5[m]
&$\underline{3.99^{\pm.12}}$
&$9.47^{\pm.05}$
&$\mathbf{0.57^{\pm.02}}$
&$\mathbf{0.67^{\pm.01}}$
\\
camera dist=7[m] (ours)
& $\mathbf{3.92^{\pm.15}}$
& $9.47^{\pm.07}$
& $\underline{0.56^{\pm.03}}$
& $\mathbf{0.67^{\pm.01}}$
\\
camera dist=11[m]
&$4.04^{\pm.13}$
&$9.48^{\pm.05}$
&$0.55^{\pm.02}$
&$0.66^{\pm.01}$
\\
camera dist=30[m]
&$4.29^{\pm.14}$
&$9.49^{\pm.05}$
&$0.55^{\pm.02}$
&$0.65^{\pm.01}$
\\

\hline
diff steps=20
&$5.14^{\pm.13}$
&$9.04^{\pm.01}$
&$\mathbf{0.68^{\pm.01}}$
&$0.42^{\pm.01}$
\\
diff steps=50
&$5.49^{\pm.13}$
&$\mathbf{8.99^{\pm.04}}$
&$\mathbf{0.68^{\pm.02}}$
&$0.36^{\pm.01}$
\\
diff steps=100 (ours)
& $\mathbf{3.92^{\pm.15}}$
& $9.47^{\pm.07}$
& $0.56^{\pm.03}$
& $\mathbf{0.67^{\pm.01}}$
\\

\bottomrule
\end{tabular}
} 
\vspace{-10pt}
\caption{
\small
\textbf{NBA Dataset Ablations.} 
Performance saturates for number of views $\geq 5$; Optimal performance achieved at camera distance (dist) around $7$ meters; Fewer diffusion steps harm recall and FID.
}
\label{table:more_ablation}
\end{table}

\newtheorem{theorem}{Theorem}
\section{Gradient Update Formula}
\label{appendix:grad_update}

In order to clarify the difference between SDS and our method, we calculate the gradient update formula w.r.t our optimized loss. Denote by $X^{\left(i\right)}$ the optimizing motion at iteration $i$.
When differentiating our loss w.r.t $X^{\left(i\right)}$  we get:
\begin{align}
\nabla_{X^{\left(i\right)}}\lVert P\left(X^{\left(i\right)}\right)-\hat{x_{0}}\rVert	_{2}^{2} & \\=\left(P\left(X^{\left(i\right)}\right)-\frac{x_{t}-\sqrt{1-\bar{\alpha}_{t}}\epsilon_{\phi}\left(x_{t}\right)}{\sqrt{\bar{\alpha}_{t}}}\right)\frac{\partial p}{\partial X^{\left(i\right)}}
\end{align}
which is clearly differers from $\nabla\mathcal{L}_\text{SDS}$. Let us observe substituting our $x_t$ sampling with a simple forward diffusion: $x_t=\sqrt{\bar{\alpha}_t} P\left( X^{\left(i-1\right)}\right)+\left(\sqrt{1-\bar{\alpha}}\right)\varepsilon$ - as used in DreamFusion. (This formulation is also analyzed in HIFA~\citep{zhu2023hifa}):
 
{\tiny 
\begin{align}
\nabla_{X^{\left(i\right)}}\lVert P\left(X^{\left(i\right)}\right)-\hat{x_{0}}\rVert  = \\ \left(P\left(X^{\left(i\right)}\right)-\frac{x_{t}-\sqrt{1-\bar{\alpha}_{t}}\epsilon_{\phi}\left(x_{t}\right)}{\sqrt{\bar{\alpha}_{t}}}\right)\frac{\partial p}{\partial X_{i}}= \\
 \left(P\left(X^{\left(i\right)}\right)-\frac{\sqrt{\bar{\alpha}_{t}}P\left(X^{\left(i-1\right)}\right)+\sqrt{1-\bar{\alpha}_{t}}\varepsilon-\sqrt{1-\bar{\alpha_{t}}}\epsilon_{\phi}\left(x_{t}\right)}{\sqrt{\bar{\alpha_{t}}}}\right)\frac{\partial p}{\partial X_{i}}= \\
 \left(P\left(X^{\left(i\right)}\right)-P\left(X^{\left(i-1\right)}\right)+\frac{\sqrt{1-\bar{\alpha}_{t}}}{\sqrt{\bar{\alpha_{t}}}}\left(\varepsilon-\epsilon_{\phi}\left(x_{t}\right)\right)\right)\frac{\partial p}{\partial X_{i}}
\end{align}
} %

 If we observe the first iteration of optimization, we have $X^{\left(i\right)}=  X^{\left(i-1\right)}$ so we get:
\begin{equation}
    \nabla_{X}\lVert P\left(X\right)-\hat{x_{0}}\rVert_2^2 
=\frac{\sqrt{1-\bar{\alpha}_{t}}}{\sqrt{\bar{\alpha_{t}}}}\left(\varepsilon-\epsilon_{\phi}\left(x_{t}\right)\right)\frac{\partial p}{\partial X}
\end{equation}
This shows that SDS loss is a special case of our loss when sampling $x_t=  \sqrt{\bar{\alpha}_t} P\left( X^{\left(i-1\right)}\right)+\left(\sqrt{1-\bar{\alpha}}\right)\varepsilon$ (where $\varepsilon \sim \mathcal{N}\left(0,I\right)$), and applying only a single optimization step (after the first step, $X^{\left(i\right)}\ne X^{\left(i-1\right)}$).
\section{Theorems}
\label{appendix:theorems}

\begin{theorem}
\label{theorem: multiview noise distribution}
Let $\varepsilon=\left(\begin{matrix}x_{\varepsilon}\\
y_{\varepsilon}\\
z_{\varepsilon}
\end{matrix}\right)\sim \mathcal{N}\left(0,I_{3\times3}\right)$ and let $P\in \mathbb{R}^{2\times 3}$ be an orthogonal projection matrix, then $P\cdot\varepsilon\sim\mathcal{N}\left(0,I_{2\times2}\right)$.
\end{theorem}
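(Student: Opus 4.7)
The plan is to invoke the standard fact that any affine map of a multivariate Gaussian is again Gaussian, with mean and covariance transforming in the obvious way. First, I would observe that $P\varepsilon$ is a linear image of a Gaussian vector and therefore itself Gaussian; it then suffices to compute its mean and covariance to identify the distribution.

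Next I would compute
\begin{equation*}
\mathbb{E}[P\varepsilon] = P\,\mathbb{E}[\varepsilon] = P\cdot 0 = 0,
\end{equation*}
so the mean is correct, and
\begin{equation*}
\mathrm{Cov}(P\varepsilon) = P\,\mathrm{Cov}(\varepsilon)\,P^T = P\,I_{3\times 3}\,P^T = P P^T.
\end{equation*}
Thus the whole theorem reduces to verifying that $PP^T = I_{2\times 2}$.

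The main (and essentially only) content is to unpack what "orthogonal projection matrix" means for a non-square $P\in\mathbb{R}^{2\times 3}$ in this context: namely, that the two rows of $P$ form an orthonormal pair spanning the image plane of the orthographic projection. Writing $P = \bigl[\,r_1^T;\, r_2^T\,\bigr]$ with $r_i\in\mathbb{R}^3$ satisfying $\langle r_i,r_j\rangle = \delta_{ij}$, the $(i,j)$ entry of $PP^T$ is exactly $\langle r_i,r_j\rangle$, giving $PP^T = I_{2\times 2}$. Substituting back yields $P\varepsilon\sim \mathcal{N}(0,I_{2\times 2})$, as claimed.

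The only possible obstacle is the interpretation of "orthogonal projection matrix": strictly speaking, an orthogonal projection onto a 2D subspace of $\mathbb{R}^3$ is a $3\times 3$ idempotent symmetric matrix of rank $2$, whereas here $P$ is $2\times 3$. I would therefore briefly note that $P$ should be read as the matrix of coordinates in an orthonormal basis of the image plane (equivalently, $P = U^T$ for a $3\times 2$ matrix $U$ with orthonormal columns), so that $PP^T = U^T U = I_{2\times 2}$. With this clarification, the proof is complete in a few lines and no nontrivial step remains.
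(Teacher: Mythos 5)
Your proposal is correct and follows essentially the same route as the paper: both reduce the claim to showing $PP^T = I_{2\times 2}$ after noting that a linear image of a Gaussian is Gaussian with mean $P\cdot 0 = 0$ and covariance $P I P^T$. The paper establishes $PP^T=I$ by factoring $P = O P'$ with $P'$ a rotation and $O$ the coordinate-dropping matrix, which is just another way of saying your observation that the rows of $P$ are orthonormal.
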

\begin{proof}
First, $P\cdot\varepsilon$ has a normal distribution as a linear
combination of normal variables.\\
In addition, $\mathbb{E}\left[P\cdot\varepsilon\right]=P\cdot\mathbb{E}\left[\varepsilon\right]=0$.\\
Now we will prove that $\mathrm{Var}\left[P\cdot\varepsilon\right]=I_{2\times2}$:\\
Denote $O=\left(\begin{matrix}1 & 0 & 0\\
0 & 1 & 0
\end{matrix}\right)$ then we know that $P=O\cdot P^{\prime}$ where $P^{\prime}$ is a
rotation matrix, i.e. $P^{\prime}\cdot\left(P^{\prime}\right)^{T}=I_{2\times2}$
. Then 
\begin{align*}
P\cdot P^{T} & =\left(O\cdot P^{\prime}\right)\left(O\cdot P^{\prime}\right)^{T}=\\ O\cdot\overset{I}{\overbrace{\left(P^{\prime}P^{\prime}{}^{T}\right)}}O^{T}=OO^{T}=I
\end{align*}
 Furthermore, $\mathbb{E}\left[\varepsilon\cdot\varepsilon^{T}\right]=\mathbb{E}\left[\varepsilon\cdot\varepsilon^{T}\right]-\overset{0}{\overbrace{\mathbb{E}\left[\varepsilon\right]\mathbb{E}\left[\varepsilon\right]^{T}}}=\mathrm{Var}\left[\varepsilon\right]=I$.\\
Therefore:
\begin{align*}
\mathrm{Var}\left[P\cdot\varepsilon\right]  =\mathbb{E}\left[\left(P\cdot\varepsilon\right)\left(P\cdot\varepsilon\right)^{T}\right]-\overset{0}{\overbrace{\mathbb{E}\left[P\cdot\varepsilon\right]}}\cdot\overset{0}{\overbrace{\mathbb{E}\left[P\cdot\varepsilon\right]^{T}}}=\\
  \mathbb{E}\left[P\cdot\varepsilon\cdot\varepsilon^{T}\cdot P^{T}\right]=\\ P\cdot\overset{I}{\overbrace{\mathbb{E}\left[\varepsilon\cdot\varepsilon^{T}\right]}}\cdot P^{T}=P\cdot P^{T}=I
\end{align*}
\end{proof}

\begin{theorem}
\label{theorem: orthographic vs perspective}
Let $X\in \mathbb{R}^3$, and denote by $p_\text{orth}\left(X\right),p_\text{pers}\left(X\right)$  the orthographic and perspective projections of $X$ to the same view, respectively. We assume that the subject is centered in the origin and is bounded in a sphere with radius $1$ ($\left\Vert X\right\Vert _{\infty}\leq1$). We also assume the perspective projection is dome from distance $d$ from the origin. Then $\left\Vert p_\text{orth}\left(X\right)-p_\text{pers}\left(X\right) \right\Vert _{\infty} = O\left(\frac{1}{d-1}\right)$.
\end{theorem}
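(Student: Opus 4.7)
The plan is to pick a convenient coordinate system, write the two projections as explicit rational functions of the coordinates of $X$, subtract them, and bound the result using the hypothesis $\lVert X\rVert_\infty\le 1$.

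First I would reduce to a canonical camera. By rotational and translational invariance of the statement (both projections transform covariantly under rigid motions of the scene), I may assume the camera sits on the positive $z$-axis at $(0,0,d)$, looks toward the origin along $-z$, and has its image plane passing through the origin with axes aligned to $x,y$. Let $X=(x,y,z)^{\!\top}$. Under this setup the orthographic projection is simply
\begin{equation*}
p_{\mathrm{orth}}(X)=(x,y),
\end{equation*}
while the pinhole perspective projection, with focal length $f$ chosen to match orthographic at $z=0$ (i.e.\ $f=d$, which is the only choice that makes the comparison meaningful), is
\begin{equation*}
p_{\mathrm{pers}}(X)=\frac{d}{d-z}\,(x,y).
\end{equation*}

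Next I would compute the difference directly:
\begin{equation*}
p_{\mathrm{pers}}(X)-p_{\mathrm{orth}}(X)=\left(\frac{d}{d-z}-1\right)(x,y)=\frac{z}{d-z}\,(x,y).
\end{equation*}
Now I would apply the bound $\lVert X\rVert_\infty\le 1$, which gives $|x|,|y|,|z|\le 1$ and in particular $d-z\ge d-1>0$ (assuming $d>1$, which is needed for the perspective projection to be well-defined on the subject). Hence, coordinate-wise,
\begin{equation*}
\bigl\lVert p_{\mathrm{orth}}(X)-p_{\mathrm{pers}}(X)\bigr\rVert_\infty=\frac{|z|}{d-z}\,\max(|x|,|y|)\le \frac{1}{d-1},
\end{equation*}
which is exactly $O\!\left(1/(d-1)\right)$.

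The only delicate point is the choice of focal length: if $f\ne d$ the two projections differ by an overall scale factor that does not vanish as $d\to\infty$, so the statement implicitly requires a normalization that makes them agree at the center of the subject. I would note this explicitly at the start of the proof; after that, the argument is a one-line estimate on $z/(d-z)$, so I do not anticipate any real obstacle beyond fixing the conventions.
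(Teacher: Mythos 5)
Your proof is correct and follows essentially the same route as the paper's: write both projections explicitly, observe that their difference is the transverse component scaled by $\frac{R_z\cdot X}{d+R_z\cdot X}$ (in your canonical frame, $\frac{z}{d-z}$), and bound this by $\frac{1}{d-1}$ using the unit bound on the subject. Your explicit remark that the focal length must be normalized to $f=d$ for the comparison to be meaningful is a convention the paper adopts silently, but it does not change the argument.
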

\begin{proof}
First, denote the rotation matrix that corresponds to the view by
$R\in\mathbb{R}^{3\times3}$ and $R_{xy}=\left(\begin{matrix}1 & 0 & 0\\
0 & 1 & 0
\end{matrix}\right)\cdot R$, $R_{z}=\left(\begin{matrix}0 & 0 & 1\end{matrix}\right)\cdot R$.
Then 
\[
p_{\text{orth}}\left(X\right)=R_{xy}\cdot X,p_{\text{pers}}\left(X\right)=\frac{R_{xy}\cdot X}{d+R_{z}\cdot X}\cdot d
\]
So
\begin{align*}
p_{\text{orth}}\left(X\right)-p_{\text{pers}}\left(X\right)   =\\ R_{xy}\cdot X-\frac{R_{xy}\cdot X}{d+R_{z}\cdot X}\cdot d=\frac{R_{xy}\cdot X\cdot\left(d+R_{z}\cdot X-d\right)}{d+R_{z}\cdot X}=\\
  \frac{R_{xy}\cdot X\cdot R_{z}\cdot X}{d+R_{z}\cdot X}
\end{align*}
Assume $\left\Vert X\right\Vert _{\infty}\leq 1$, then $\left\Vert\frac{R_{xy}\cdot X\cdot R_{z}\cdot X}{d+R_{z}\cdot X}\right\Vert_\infty \leq\frac{1}{d-1}$.
\end{proof}

\end{document}